\documentclass[conference]{IEEEtran}
\usepackage[font=footnotesize]{caption}
\usepackage{subcaption}
\usepackage[compatibility=false]{caption}
\IEEEoverridecommandlockouts
\usepackage{cite}
\usepackage{pifont}   
\usepackage{amssymb}  
\newcommand{\cmark}{\ding{51}}                 
\newcommand{\xmark}{\ding{55}}                 
\newcommand{\pmark}{\ensuremath{\triangle}}  
\usepackage[english]{babel}
\usepackage[autostyle, english = american]{csquotes}
\MakeOuterQuote{"}
\usepackage{amsmath,amssymb,amsfonts}
\usepackage{graphicx}
\usepackage{textcomp}
\usepackage{adjustbox}
\usepackage{lscape}
\usepackage{hologo}
\usepackage{placeins}
\usepackage[utf8]{inputenc}
\usepackage{float}
\usepackage[linesnumbered,ruled,vlined]{algorithm2e}
\usepackage{booktabs}
\usepackage{algpseudocode}
\usepackage{tabularx}
\usepackage{comment}
\usepackage{xcolor}
\usepackage[breaklinks=true, colorlinks, bookmarks=true]{hyperref}
\usepackage{cleveref} 
\usepackage[font=footnotesize]{caption}
\usepackage{subcaption}
\usepackage{amsthm}
\usepackage[breaklinks=true, colorlinks, bookmarks=true]{hyperref}
\usepackage{tikz}
\usepackage{pgfplots}
\pgfplotsset{compat=1.9}
\usepackage{booktabs}
\usepackage{siunitx}
\sisetup{detect-weight=true,detect-family=true}
\newtheorem{theorem}{Theorem}[section]
\newtheorem{definition}[theorem]{Definition}
\newtheorem{lemma}[theorem]{Lemma}
\usepackage{tabularx,booktabs,makecell,array}
\newcolumntype{Y}{>{\raggedright\arraybackslash}X} 
\newcolumntype{C}{>{\centering\arraybackslash}p{0.14\columnwidth}} 

\IEEEoverridecommandlockouts  
\overrideIEEEmargins  
\def\BibTeX{{\rm B\kern-.05em{\sc i\kern-.025em b}\kern-.08em
    T\kern-.1667em\lower.7ex\hbox{E}\kern-.125emX}}
\begin{document}

\title{\LARGE \bf Bootstrapped LLM Semantics for Context-Aware Path Planning\

\thanks{The presented work has been supported by the U.S. National Science Foundation (NSF) CAREER Award through grant No. 2047138. The authors gratefully acknowledge the support from the NSF. Any opinions, findings, conclusions, and recommendations expressed in this paper are those of the authors and do not necessarily represent those of the NSF.\\
$^1$Department of Civil, Construction, and Environmental Engineering, San Diego State University, San Diego, CA, United States\\
$^2$Department of Electrical and Computer Engineering, University of California San Diego, San Diego, CA, United States\\
$^3$Department of Computer Science, San Diego State University, San Diego, CA, United States}
}

\author{{ Mani Amani$^{1 , 2}$, Behrad Behesthi$^{3}$, Reza Akhavian$^{1}$}}

\maketitle

\begin{abstract}
Prompting robots with natural language (NL) has largely been studied as \emph{what} task to execute (goal selection, skill sequencing) rather than \emph{how} to execute that task safely and efficiently in semantically rich, human-centric spaces. We address this gap with a framework that turns a large language model (LLM) into a stochastic \emph{semantic sensor} whose outputs modulate a classical planner. Given a prompt and a semantic map, we draw multiple LLM “danger” judgments and apply a Bayesian bootstrap to approximate a posterior over per-class risk. Using statistics from the posterior, we create a potential cost to formulate a path planning problem. Across simulated environments and a BIM-backed digital twin, our method adapts \emph{how} the robot moves in response to explicit prompts and implicit contextual information. We present qualitative and quantitative results.
\end{abstract}

\section{Introduction}
Robots operating in human-centric, cluttered environments must plan not only for geometric safety  \cite{amani2024intelligent} but also for \emph{semantic} context: which regions are likely to be busy, which objects are currently in use (where robot motion would be disruptive), and which hazards merit conservative treatment. While classical planners encode proximity via distance-based potentials or minimum distance fields (MDF), they are agnostic to evolving, task-level semantics that are naturally conveyed in language \cite{APF,Voxblox}.

LLMs exhibit strong reasoning capabilities \cite{LLMreason}, and many works use them to have robots act on NL commands \cite{LTLCODEGEN,voxposer}. With the rise of LLMs, Vision–Language (VLM) and Vision–Language–Action (VLA) models have also gained attention for their impressive grounding and decision-making abilities. Recent research in semantic mapping and cost shaping seeks to bridge geometric planning and higher-level context. Semantic SLAM augments maps with object labels and affordances \cite{semantic_slam}, enabling robots to avoid “chairs” or “tables” rather than abstract occupancy. Language as costs (LaC) \cite{langascost}, uses "anxiety scores" from vision–language models to shape paths. However, LaC depends on online visual grounding or assumes static language-to-cost mappings, limiting adaptation to evolving context through user commands. Parallel efforts in risk-sensitive planning, such as chance-constrained planning \cite{chance_constrained}, Conditional Value-at-Risk (CVaR)-based path planning \cite{risk_aware_planning}, and robust RL \cite{robust_planning}, account for uncertainty in dynamics or perception but rarely incorporate natural language as a source of semantic uncertainty. Conversely, language-grounded planners (e.g., LM-Nav \cite{LMNav} and LTLCodeGen \cite{LTLCODEGEN}) leverage LLMs to translate prompts into automata or plans. Yet once an automaton is synthesized, it is essentially fixed and prompt-/task-specific. VoxPoser \cite{voxposer} generates dense, language-conditioned 3D voxel value maps for 6-DoF manipulation tasks. However, the reported results are limited to small tabletop setups specifically for robot arm manipulation. 

To address this gap, we propose a language-to-cost formulation that produces a reusable, task-agnostic cost map that can be optimized across many problems, enabling multiple levels of planning and understanding while remaining agnostic to the underlying framework. By treating the LLM as a stochastic semantic sensor, we explicitly model its uncertainty rather than relying on single-shot outputs or deterministic parses. The Bayesian bootstrap \cite{bayesianbootstrap} provides a nonparametric posterior over semantic risk judgments, and CVaR furnishes a principled mechanism to tune conservatism during planning. This enables a closed-form integration of prompt-conditioned semantics into classical potential–field–based costs \cite{amani2024}, preserving optimality guarantees while remaining adaptable to evolving NL inputs. In addition, our framework supports both explicit and implicit user prompts. Beyond direct instructions, it can interpret contextual cues and align them with environmental semantics, yielding qualitatively better path planning decisions. Table~\ref{tab:lang_map_closed_form_implicit} highlights how our method compares to prior work in terms of language-to-map grounding, closed-form formulations, user prompting, and implicit context understanding.
\begin{table}
  \centering
  \small
  \setlength{\tabcolsep}{3pt}
  \renewcommand{\arraystretch}{1.05}
  \caption{Related works (\cmark present, \xmark absent, \pmark partial).}
  \label{tab:lang_map_closed_form_implicit}
  
  \begin{tabularx}{\columnwidth}{@{}YCCCC@{}}
    \toprule
    Method & Language to Map & Closed-form & User-prompted & Implicit Context \\
    \midrule
    \makecell[l]{Ours}                   & \cmark & \cmark & \cmark & \cmark \\
    VoxPoser \cite{voxposer}             & \cmark & \xmark & \cmark & \cmark \\
    \makecell[l]{LM-Nav \cite{LMNav}}    & \xmark & \xmark & \cmark & \xmark \\
    \makecell[l]{LTLCodeGen \cite{LTLCODEGEN}} & \xmark & \xmark & \cmark & \pmark \\
    LaC \cite{langascost}                & \cmark & \cmark & \xmark & \pmark \\
    \bottomrule
  \end{tabularx}
\end{table}

\par
\par
Specifically, the following are the main contributions of this paper:
\begin{itemize}
  \item A nonparametric \emph{distribution bootstrapping} view of LLM outputs for estimation: treating the LLM as a stochastic sensor and using Bayesian bootstrap to capture semantic uncertainty without a fixed parametric prior.
  \item Formulating a semantic-risk aware pathfinding into a Deterministic Shortest Path (DSP) problem with optimality. This approach can be generalized for any cost optimization problem. 
  \item A context-sensitive cost shaping that maps the bootstrap posterior to planning costs via CVaR, enabling tunable conservatism and per-class, prompt-conditioned repulsive potentials that plug into classical planners.
\end{itemize}
The overall methodology can be seen in Figure \ref{fig:mainfig}.
\begin{figure*}[ht]
    \centering
    \includegraphics[width=0.85\linewidth]{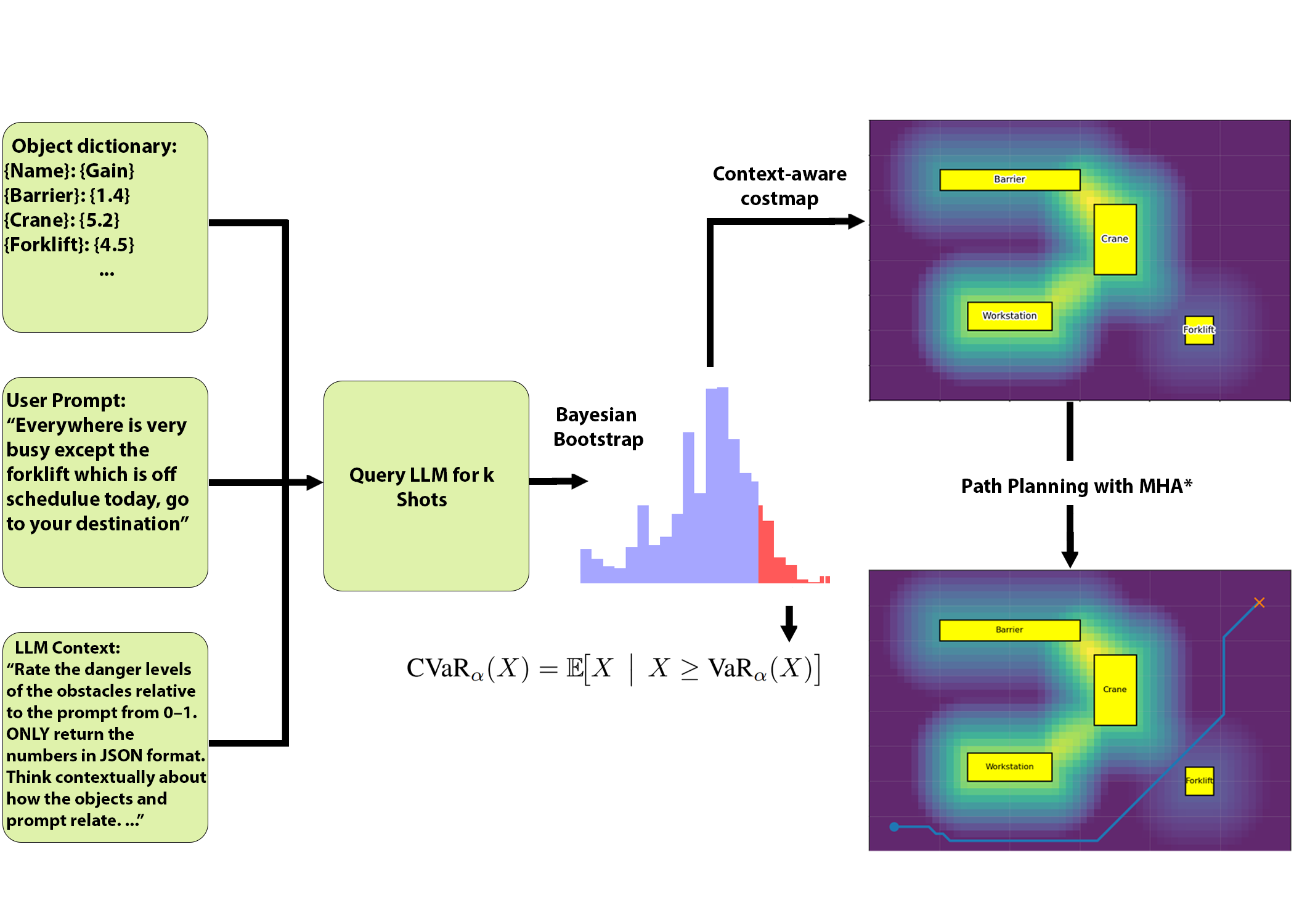}
    \caption{Overall framework to support the context-aware understanding of the pathplanning problem}
    \label{fig:mainfig}
\end{figure*}

\section{Problem Setup}
We interpret the LLM as a stochastic sensor that produces independent and identically distributed (i.i.d.) readings 
in the bounded interval $[0,1]$. Let $\mathcal{X} = \{x_1, x_2, \ldots, x_k\}$ denote a sample of such 
readings. Rather than using individual values, we employ $\mathcal{X}$ to approximate the underlying 
distribution of the sensor. This distribution is then used as a proxy to model the output distribution 
of a large language model (LLM) under a given prompt $P$.

The intuition is that the stochastic sensor acts as a \emph{semantic regressor}, mapping prompts $P$ to 
avoidance sentiments over an obstacle set $\mathcal{O} = \{o_1, o_2, \ldots, o_m\}$ for which the semantic label of each obstacle is known. For each obstacle label
$o \in \mathcal{O}$, the inferred sentiment distribution is denoted by
\begin{equation}
    \pi(o \mid P, \mathcal{X})
\end{equation}
which captures the predicted danger level conditioned on both the prompt and the sensor samples.

We then embed these sentiment-driven distributions into a path planning framework. Specifically, we 
formulate a DSP problem, where the edge cost is a combination of the repulsive cost and the distance cost.

\section{Mathematical Framework}
\subsection{Repulsive Fields}
Distance-based repulsive fields are a common approach to formalizing obstacle avoidance in robotic 
navigation tasks. Techniques such as signed distance fields (SDF), Euclidean distance fields (EDF), 
and other variants have been widely employed to quantify the spatial presence of objects within an 
environment. Classical potential field methods generate repulsive forces that penalize trajectories 
leading the robot too close to obstacles, thereby reducing the likelihood of collision.
In this work, we adopt a similar potential field formulation, where an exponential penalty function 
discourages proximity to nearby obstacles \cite{amani2024,victor2022dynamical}. A representative candidate function is given by
\begin{equation}
    \phi(d) = \lambda \exp\!\left(-{d}\right),
\end{equation}
where $d$ denotes the minimum distance to some obstacle, $\lambda > 0$ controls the strength of the repulsion. This formulation ensures that obstacles exert strong 
penalties at close range while diminishing influence at larger distances, thereby encouraging safe 
navigation without overly restricting global motion. Note that each obstacle \(o_i\) can have its own specific \(\lambda_i\) value depending either on user input, or on the semantic value. 
\subsection{Semantic Augmentation of Repulsive Fields}
To incorporate semantic information into the repulsive field, we require a scalar quantity derived 
from the stochastic sensor distribution. Since the LLM-based sensor produces i.i.d. samples conditioned 
on a prompt $P$, we seek to approximate the underlying distribution and extract a statistic that can 
be used to modulate the repulsive field strength.

We employ Bayesian bootstrapping to approximate this distribution. Concretely, let 
$\mathcal{X} = \{x_1, x_2, \ldots, x_k\}$ denote $K$ i.i.d. samples obtained from the LLM sensor output. 
Instead of assuming a fixed parametric prior, we draw $R$ resamples by generating Dirichlet-distributed 
weights $\mathbf{w}^{(r)} \sim \mathrm{Dirichlet}(1, \ldots, 1)$ and forming weighted empirical 
distributions
\begin{equation}
    F^{(r)}(x) = \sum_{i=1}^k w^{(r)}_i \, \delta_{x_i}(x), \quad r = 1, \ldots, R,
\end{equation}
where $\delta_{x_i}$ is the Dirac mass at $x_i$. Each $F^{(r)}$ provides a plausible realization of the posterior distribution of the LLM sensor. 

From these bootstrap distributions, we can compute a scalar statistic 
(e.g., the mean or a risk-sensitive quantile), denoted by $\hat{s}$, which serves as the semantic 
augmentation factor.
This approach allows the repulsive field to be adaptively modulated by semantic danger signals inferred 
from the LLM outputs, seamlessly combining geometric proximity with language-model-informed avoidance 
criteria.
\par
The advantage of approximating the distribution of the LLM is that we can now use certain statistics to influence the mission planning. We choose to use the CVaR \cite{CVAR} given by
\begin{align}
\text{CVaR}_{\alpha}(\mathcal{X})= \mathbb{E}\!\bigl[\mathcal{X} \,\bigm|\, \mathcal{X} \ge \text{VaR}_{\alpha}(\mathcal{X})\bigr],\\
\text{VaR}_{\alpha}(\mathcal{X}) =  \inf\{\,x\in\mathbb{R} : F(x)\ge \alpha\,\}.
\end{align}

to scale the potential function give by \(\lambda_{\text{scaled}} = \lambda_{\text{prior}}.\text{CVaR}_{\alpha}(\mathcal{X})\).
\subsection{Multi-Heuristic Path Planning}

We minimize a combined edge cost 
\begin{equation}
c(s,s') = d(s,s') + \gamma\,\Phi(s')
\label{eq:edgecost}
\end{equation}
where \(d(s,s')\) is the nominal Euclidean distance cost and \(\Phi:\mathcal{S}\to\mathbb{R}_{\ge0}\) is a nonnegative repulsive potential defined on states. Let \(\mathcal{O}\) denote the set of obstacles and \(d_i(s)\) the Euclidean distance from \(s\) to obstacle \(i\). $\gamma \geq 0$ is a tunable scaling factor, and 
$\Phi$ is the obstacle avoidance cost. The value of $\Phi$ is influenced by the predicted danger 
distribution $\pi(o \mid P, \mathcal{X})$, thereby coupling geometric distance with language-model-informed 
semantic risk.  We define
\begin{equation}
  \Phi(s) \;=\; \sum_{i\in\mathcal{O}} \phi\!\big(d_i(s)\big),
\end{equation}
with \(\phi:\mathbb{R}_{\ge0}\to\mathbb{R}_{\ge0}\) decreasing (e.g., \(\phi(r)=\lambda\,e^{-r}\)) with scale \(\lambda>0\). The sum helps us create an additive cost map that would penalize cluttered areas if need be. This sum is relatively lightweight since it can be done in $O(cN)$ where $N$ is the number of grids and $c$ is the number of classes.

We run MHA* \cite{MHA*} with a single consistent \emph{anchor} heuristic and one (informative, possibly inadmissible) auxiliary heuristic that incorporates obstacle proximity.
\begin{definition}[Consistent heuristic]
Let $h:S\!\to\!\mathbb{R}_{\ge 0}$ be a heuristic on states $S$ with goal set $G$
and step costs $c(n,n')\!\ge\!0$ for each edge $(n,n')$.
The heuristic $h$ is \emph{consistent} if and only if
\[
  h(n) \le c(n,n') + h(n') \quad \text{for all edges }(n,n')
\]
and
\[
  h(g)=0 \quad \text{for all } g\in G .
\]
Equivalently, $h$ satisfies the triangle inequality on edges and is admissible.
\end{definition}
\begin{lemma}[Euclidean anchor is consistent for \(c\)]
Assume \(d(s,s') \ge \|x_s - x_{s'}\|_2\) for all neighbors (e.g., 4/8-connected grids with costs \(1,\sqrt{2}\)). Then the Euclidean distance
\(
  h_0(s)=\|x_s-x_g\|_2
\)
is consistent (hence admissible) for the combined cost \(c(s,s')=d(s,s')+\gamma\,\Phi(s')\).
\end{lemma}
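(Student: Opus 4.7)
The plan is to verify the two defining properties of a consistent heuristic directly: (i) $h_0(g)=0$ for every goal, and (ii) the edge inequality $h_0(s)\le c(s,s')+h_0(s')$ for every neighbor pair. The first is immediate from $h_0(g)=\|x_g-x_g\|_2=0$, so essentially all the content is in the edge inequality.

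For (ii), I would chain three ingredients. First, apply the Euclidean triangle inequality in the workspace to the points $x_s$, $x_{s'}$, $x_g$:
\begin{equation}
\|x_s-x_g\|_2 \le \|x_s-x_{s'}\|_2 + \|x_{s'}-x_g\|_2,
\end{equation}
which is exactly $h_0(s)\le \|x_s-x_{s'}\|_2 + h_0(s')$. Second, invoke the hypothesis $d(s,s')\ge \|x_s-x_{s'}\|_2$ on neighbor step costs (this is where the 4/8-connected grid assumption with step costs $1,\sqrt{2}$ is used, and it is in fact an equality there). Third, use $\gamma\ge 0$ together with $\Phi(s')\ge 0$ (both given) to conclude $\gamma\,\Phi(s')\ge 0$, so that
\begin{equation}
c(s,s') \;=\; d(s,s') + \gamma\,\Phi(s') \;\ge\; \|x_s-x_{s'}\|_2.
\end{equation}
Combining with the triangle inequality gives $h_0(s)\le c(s,s')+h_0(s')$, establishing consistency. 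Admissibility then follows from the standard fact that consistency implies admissibility (telescoping along any optimal path to a goal, using $h_0(g)=0$).

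There is no real obstacle here; the lemma is essentially an observation, and the only subtle point worth flagging is the role of the hypothesis $d(s,s')\ge \|x_s-x_{s'}\|_2$. This is what prevents pathological cost assignments (e.g., a diagonal grid move priced as $1$ while its Euclidean length is $\sqrt{2}$) from breaking consistency of $h_0$; the paper's stated examples satisfy it with equality, so the assumption is benign. The nonnegativity of the repulsive term is the only reason the extra $\gamma\,\Phi(s')$ does not hurt: because it is added to $d$, it can only make $c$ larger, and a heuristic consistent for a smaller cost remains consistent for any pointwise-larger cost.
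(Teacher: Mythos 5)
Your proof is correct and follows exactly the same route as the paper's: the Euclidean triangle inequality on $x_s, x_{s'}, x_g$, the hypothesis $d(s,s')\ge\|x_s-x_{s'}\|_2$, and the nonnegativity of $\gamma\,\Phi(s')$. You additionally make explicit the goal condition $h_0(g)=0$ and the consistency-implies-admissibility step, which the paper leaves implicit, but the argument is the same.
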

\begin{proof}
For any edge \((s,s')\), by the triangle inequality
\(
h_0(s) \le \|x_s-x_{s'}\|_2 + h_0(s') \le d(s,s') + h_0(s').
\)
Since \(\gamma\,\Phi(s')\ge 0\), we have
\(
h_0(s) \le d(s,s') + \gamma\,\Phi(s') + h_0(s') = c(s,s') + h_0(s').
\)
Thus \(h_0\) is consistent w.r.t.\ \(c\).
\end{proof}

\paragraph{Auxiliary heuristic (informative, may be inadmissible).}
To anticipate repulsive cost along a straight path to the goal, we approximate a line integral of \(\Phi\) from \(s\) to \(g\). Let \(L=\lceil \|x_s-x_g\|_2/\Delta \ell \rceil\) samples
\(
x_\ell = x_s + \tfrac{\ell}{L}(x_g-x_s), \; \ell=1,\dots,L,
\)
with step \(\Delta \ell>0\). Let \(\delta_\ell = \min_{i\in\mathcal{O}} d_i(x_\ell)\) be the clearance at sample \(x_\ell\).
Using \(\phi(r)=\lambda\,e^{-r}\) yields
\begin{equation}
\Phi(x_\ell) \;=\; \sum_{i\in\mathcal{O}} \phi\!\big(d_i(x_\ell)\big)
\;\approx\; \lambda_{\ell}\,e^{-\delta_\ell},
\end{equation}
when obstacles are dense and the nearest dominates. The auxiliary heuristic is then
\begin{equation}
h_{\text{aux}}(s)
\;\approx\; \gamma \sum_{\ell=1}^{L} \Phi(x_\ell)\,\Delta\ell
\;=\; \gamma \sum_{\ell=1}^{L} \lambda_{\ell}\,e^{-\delta_\ell}\,\Delta\ell,
\end{equation}
a Riemann-sum approximation to
\(
\gamma\int_0^{\|x_s-x_g\|_2}\!\!\Phi\big(x_s + t\,\hat{v}\big)\,dt
\)
with \(\hat{v}=(x_g-x_s)/\|x_g-x_s\|_2\).
This heuristic is generally inadmissible but can be highly informative near clutter. The intuition is to use the sum of all potentials from a line from the node directly to the goal to avoid highly dense environments. MHA* uses \(h_0\) as the anchor; therefore, the returned solution retains the anchor’s optimality. The auxiliary queue can accelerate search but cannot worsen that bound; in the worst case, it only adds overhead in expansions \cite{MHA*}. Note that when \(\gamma=0\), the problem reduces to the traditional A* problem using Euclidean distance as the heuristic and the \(g\)-cost. We use the commonly used A* pathfinding algorithm as one of the baselines in our experiments.
\section{Experiments and Results}
We experiment with several different simulation scenarios using different use prompts and space configurations for the robot to conduct path planning.  Although any semantic map can drive our planner, we use an industrial construction-site example to stress-test contextual reasoning in simulation. For the real-world proof of concept, we deploy the system in a university lab environment.

\subsection{Scenario 1, explicit object avoidance}
We experiment with an environment with semantically known maps. In this environment, we experiment with 3 separate prompts:
\begin{enumerate}
    \item "The workzone is very busy today, go to your destination."
    \item "The workzone is very empty today, go to your destination."
    \item "Every station is very busy except the forklift, which is off schedule today. Go to your destination."
\end{enumerate}
The first two prompts show clear, explicit, environment-level danger sentiments, and the third one shows object-specific sentiment.  
\par
For all of these prompts we used the hyperparameters: \(R = 3000\), \(k = 16\), \(\tau = 1.0\) on the GPT-4o-mini LLM through the OpenAI API. We used seed = 7 for drawing from the Dirichlet distribution. The LLM is instructed to return only numbers between 0-1 as a rating for the danger levels that it has perceived in a JSON format. We compare our method with two baselines. One baseline is the A* algorithm that minimizes Euclidean path length \cite{AStar}, a widely used approach to robotic path planning. Comparison to LaC. We benchmark against LaC \cite{langascost}, which maps language-derived anxiety scores into planning costs in real time. For a controlled comparison, both methods plan in the same known environment; LaC is provided the environment map and applies the cost mapping described by its authors, with anxiety scores supplied from the same sources as ours. This removes mapping uncertainty and isolates differences in planning behavior. In contrast, our framework additionally accepts test-time natural-language prompts and contextual metadata that modulate semantic risk. When prompts indicate low risk, our planner relaxes conservatism, yielding shorter paths at similar success. When prompts indicate higher risk, it increases conservatism, producing larger average and minimum obstacle clearances with only modest path stretch. Across our evaluations, relative to LaC under the same environmental knowledge, we observe higher clearances and, in low-risk settings, improved efficiency while maintaining success rates. Qualitative results for the busy, empty, and object-specific prompts appear in Figure~\ref{fig:Very_Busy_Results}, Figure~\ref{fig:very-empty-results}, and Figure~\ref{fig:Forklift_Results}; aggregate path and posterior statistics are reported in Tables~\ref{tab:prompt1-metrics},~\ref{tab:prompt2-metrics}, and~\ref{tab:prompt3-metrics}.
\begin{figure}
    \centering
    \includegraphics[width=1.0\linewidth]{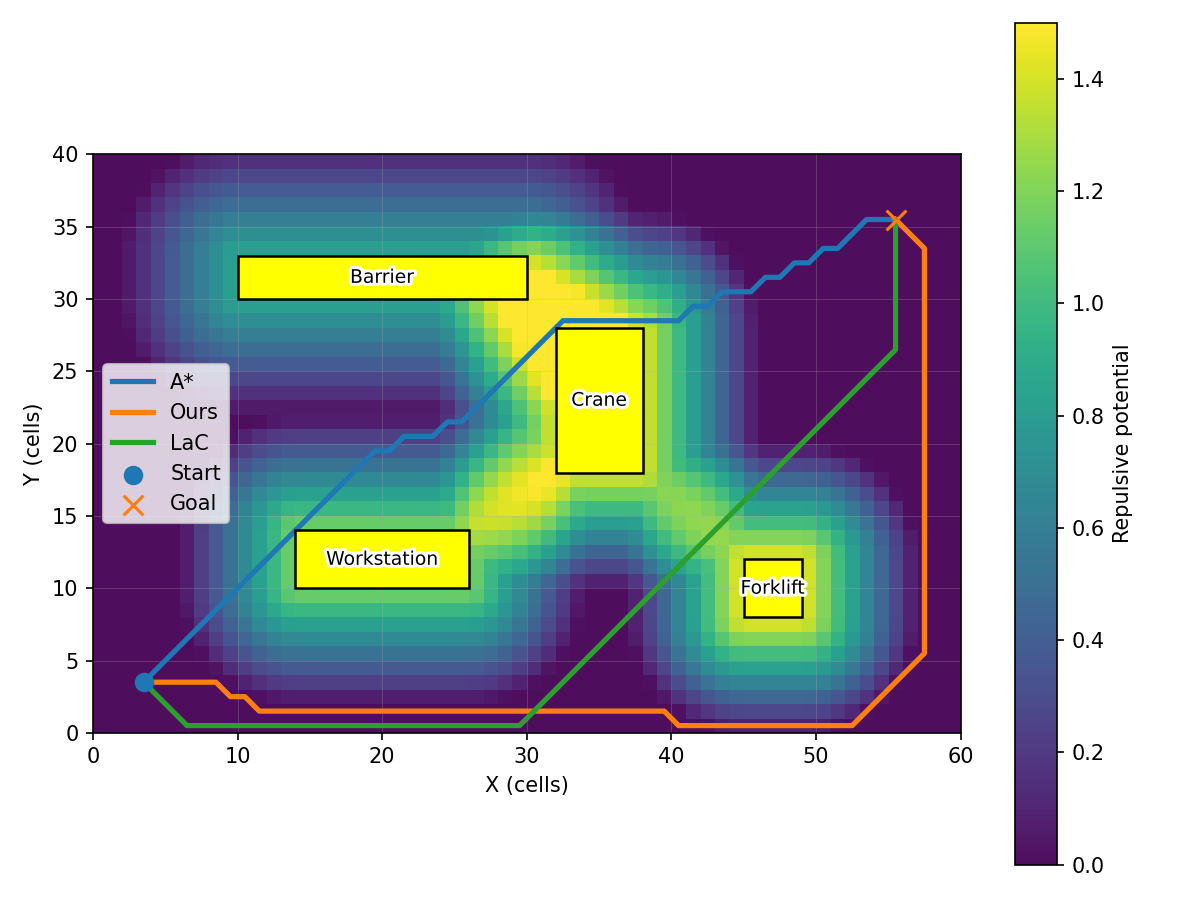}
    \caption{Overlay of paths given the prompt implying that the workzone is very busy so a more risk-aware path should be taken.}
    \label{fig:Very_Busy_Results}
\end{figure}

\begin{table}
\setlength{\tabcolsep}{4pt}
\centering
\footnotesize
\caption{Bayesian Bootstrap Diagnostics and Path Metrics for Prompt~1}
\label{tab:prompt1-metrics}

\begin{tabular}{l *{4}{S[table-format=1.3]} c}
\toprule
\multicolumn{6}{c}{\textbf{Hyperparameters} ($k{=}16$, $R{=}3000$, $\alpha=0.1$, $\tau{=} 1.0$, $\gamma = 1.5$)} \\
\midrule
& {\bfseries Workstation} & {\bfseries Crane} & {\bfseries Barrier} & {\bfseries Forklift} & \\
\cmidrule(lr){2-5}
\textbf{Posterior CVaR} & 0.600 & 0.720 & 0.437 & 0.741 & \\
\midrule
\multicolumn{6}{c}{\textbf{Path Metrics}} \\
\midrule
\textbf{Method} & \textbf{Length} & \textbf{Min.\ Dist.} & \textbf{Avg.\ Dist.} & \multicolumn{2}{c}{} \\
\midrule
A* path       & 65.25 & 0.50 & 5.83 & \multicolumn{2}{c}{} \\
Ours & 88.14   & 7.50  & 10.43 & \multicolumn{2}{c}{} \\
LaC & 73.01 & 2.92 &  10.02 & \multicolumn{2}{c}{}  \\ 
\bottomrule
\end{tabular}
\end{table}

\begin{figure}
    \centering
    \includegraphics[width=1.0\linewidth]{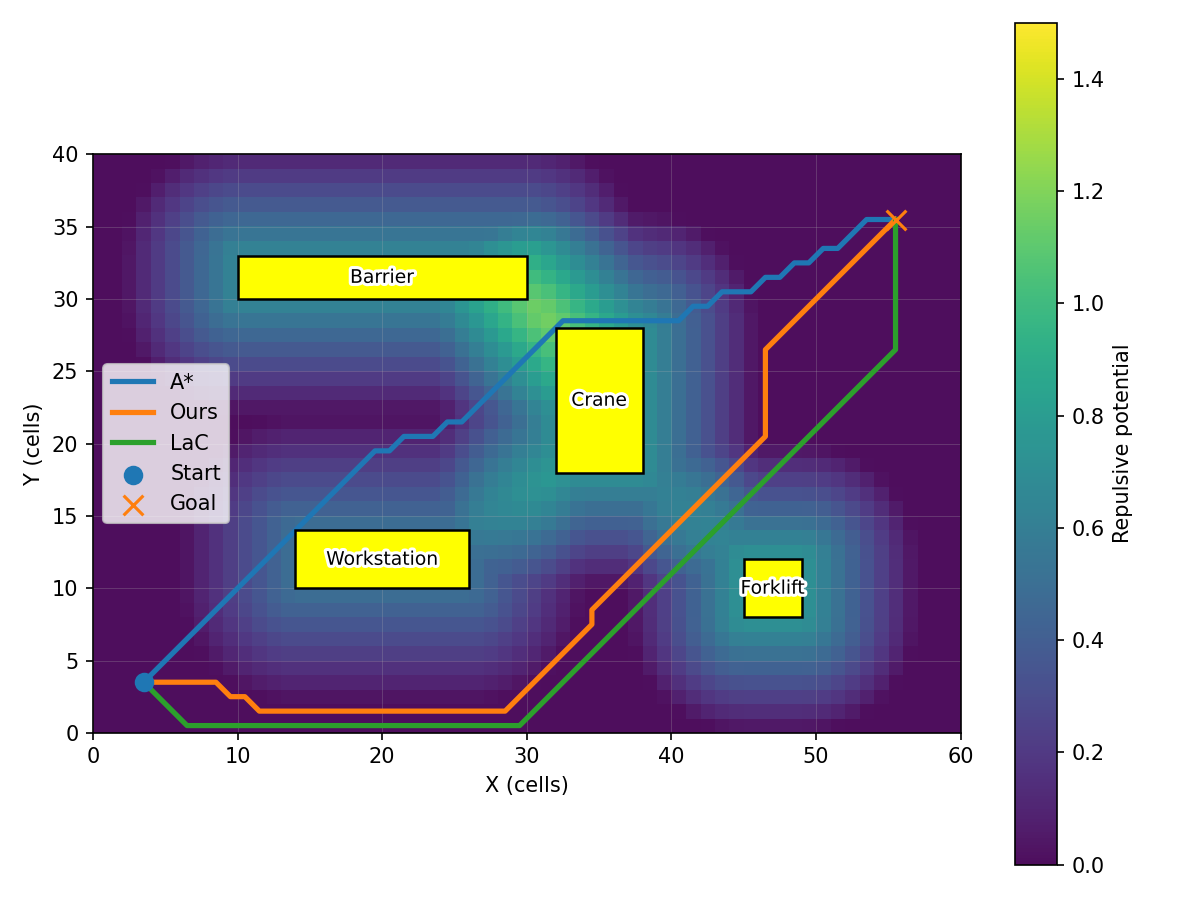}
    \caption{Overlay of paths given a prompt implying the work zone is very empty, so a more risk-neutral path should be taken.}
    \label{fig:very-empty-results}
\end{figure}

\begin{table}
\setlength{\tabcolsep}{4pt}
\centering
\footnotesize
\caption{Bayesian Bootstrap Diagnostics and Path Metrics for Prompt~2}
\label{tab:prompt2-metrics}

\begin{tabular}{l *{4}{S[table-format=2.3]} c}
\toprule
\multicolumn{6}{c}{\textbf{Hyperparameters} ($k{=}16$, $R{=}3000$, $\alpha=0.1$, $\tau{=} 1.0$, $\gamma = 1.5$)}\\
\midrule
& {\bfseries Workstation} & {\bfseries Crane} & {\bfseries Barrier} & {\bfseries Forklift} & \\
\cmidrule(lr){2-5}
\textbf{Posterior CVaR} & 0.259 & 0.365 & 0.332 & 0.376 & \\
\midrule
\multicolumn{6}{c}{\textbf{Path Metrics}} \\
\midrule
\textbf{Method} & \textbf{Length} & \textbf{Min.\ Dist.} & \textbf{Avg.\ Dist.} & \multicolumn{2}{c}{} \\
\midrule
A* path& 65.25 & 0.50 & 5.83 & \multicolumn{2}{c}{} \\
Ours    & 71.01 & 4.30 & 8.95 & \multicolumn{2}{c}{} \\
LaC & 73.01 & 2.92 &  10.02 & \multicolumn{2}{c}{}  \\ 
\bottomrule
\end{tabular}
\end{table}

\begin{figure}
    \centering
    \includegraphics[width=1.0\linewidth]{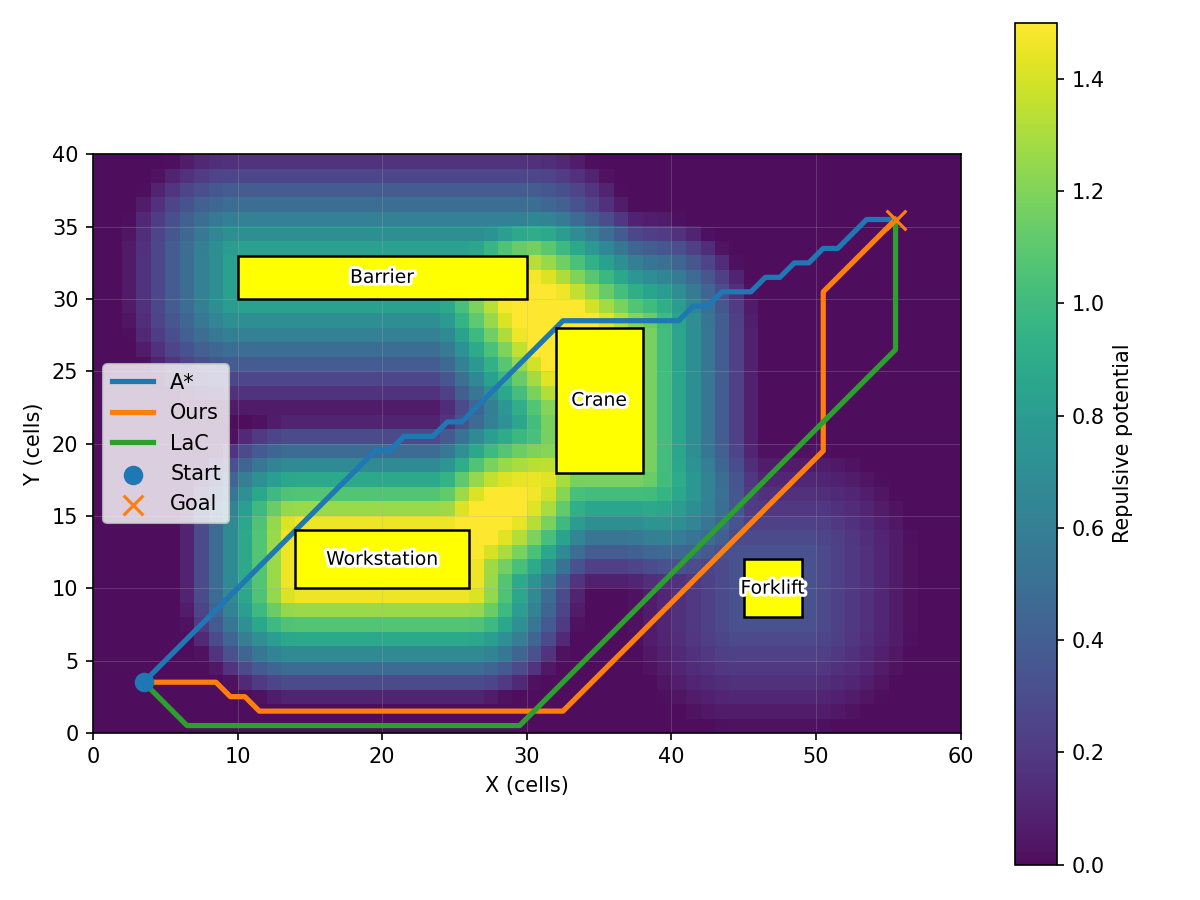}
    \caption{Overlay of paths given the prompt implying that the forklift is empty and a path in that direction should be taken. Note that the potential field around the forklift is dimmer than the rest of the obstacles. This shows the efficacy of the framework in updating semantic costs depending on obstacle and prompt qualities.}
    \label{fig:Forklift_Results}
\end{figure}

\begin{table}
\setlength{\tabcolsep}{4pt}
\centering
\footnotesize
\caption{Bayesian Bootstrap Diagnostics and Path Metrics for Prompt~3}
\label{tab:prompt3-metrics}

\begin{tabular}{l *{4}{S[table-format=2.3]} c}
\toprule
\multicolumn{6}{c}{\textbf{Hyperparameters} ($k{=}16$, $R{=}3000$, $\alpha=0.1$, $\tau{=} 1.0$, $\gamma = 1.5$)} \\
\midrule
& {\bfseries Workstation} & {\bfseries Crane} & {\bfseries Barrier} & {\bfseries Forklift} & \\
\cmidrule(lr){2-5}
\textbf{Posterior CVaR} & 0.776 & 0.626 & 0.440 & 0.177 & \\
\midrule
\multicolumn{6}{c}{\textbf{Path Metrics}} \\
\midrule
\textbf{Method} & \textbf{Length} & \textbf{Min.\ Dist.} & \textbf{Avg.\ Dist.} & \multicolumn{2}{c}{} \\
\midrule
A* path & 65.25 & 0.50 & 5.83 & \multicolumn{2}{c}{} \\
Ours    & 73.36 & 1.58 & 8.93 & \multicolumn{2}{c}{} \\
LaC & 73.01 & 2.92 &  10.42 & \multicolumn{2}{c}{}  \\ 
\bottomrule
\end{tabular}
\end{table}

\subsection{Scenario 2 implicit object avoidance.}

Here, we experiment with two different prompts for a new environment to test if the model can infer correctly with subtextual and contextual information regarding the ongoing activities.
\begin{enumerate}
    \item "The plumbing team just started work, go to the destination."
    \item "The plumbing team just finished their shift and left, go to the destination."
\end{enumerate}
We experiment with these two prompts to measure how effectively this information conveys the potential activities and semantics. By querying the LLM that the plumbing team is here to do work, we implicitly mention that there is a higher probability of activity, that the pipe storage can be more dynamic, and have a higher risk of collision and disruption. Our experiments show that the LLM is able to accurately weight these prompts to the environment to give back semantically correct paths. 
\par
In the second scenario, we see the same behavior that we saw in the first when comparing LaC and A* to our framework. Our framework is able to understand the implicit context that exists within the user prompt, whether the environment and pipes are active or not, and act upon that. However, both LaC and A* are unable to consider that. The resulting path for the "starting shift" prompt can be seen in Figure \ref{fig:plumbin}, and for the "ending shift" can be seen in Figure \ref{fig:Plumbout_Results}. The path and posterior statistics can be seen in Table \ref{tab:prompt-implicit-plumbing-active} and Table \ref{tab:prompt3-plumbout}.
\par
\begin{figure}
    \centering
    \includegraphics[width=1.0\linewidth]{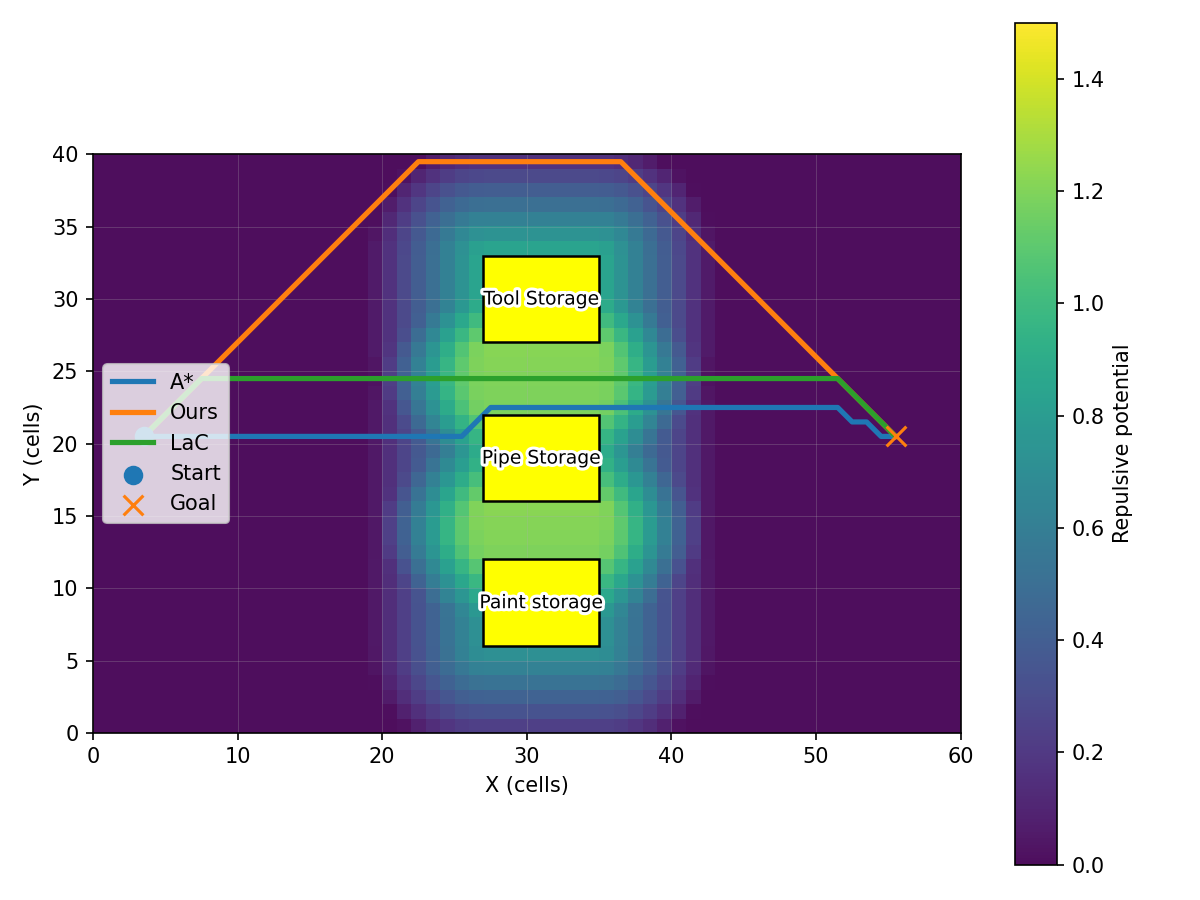}
    \caption{Prompt notifying the robot that the plumbing team is in, implicitly implying that the plumbing storage unit might be active}
    \label{fig:plumbin}
\end{figure}

\begin{table}
\setlength{\tabcolsep}{4pt}
\centering
\footnotesize
\caption{Bayesian Bootstrap Diagnostics and Path Metrics for Implicit Prompt with Plumbing active}
\label{tab:prompt-implicit-plumbing-active}

\begin{tabular}{l *{3}{S[table-format=2.3]} cc}
\toprule
\multicolumn{6}{c}{\textbf{Hyperparameters} ($k{=}16$, $R{=}3000$, $\alpha=0.1$, $\tau{=} 1.0$, $\gamma = 1.0$)} \\
\midrule
& {\bfseries Paint Storage} & {\bfseries Pipe Storage} & {\bfseries Tool Storage} & & \\
\cmidrule(lr){2-4}
\textbf{Posterior CVaR} & 0.360 & 0.528 & 0.442 & & \\
\midrule
\multicolumn{6}{c}{\textbf{Path Metrics}} \\
\midrule
\textbf{Method} & \textbf{Length} & \textbf{Min.\ Dist.} & \textbf{Avg.\ Dist.} & \multicolumn{2}{c}{} \\
\midrule
A* path & 53.66 & 0.50 & 9.44  & \multicolumn{2}{c}{} \\
Ours      & 67.74 & 5.70 & 11.66 & \multicolumn{2}{c}{} \\
LaC & 55.31 & 2.50 &  10.72 & \multicolumn{2}{c}{}  \\ 
\bottomrule
\end{tabular}
\end{table}


\begin{figure}
    \centering
    \includegraphics[width=1.0\linewidth]{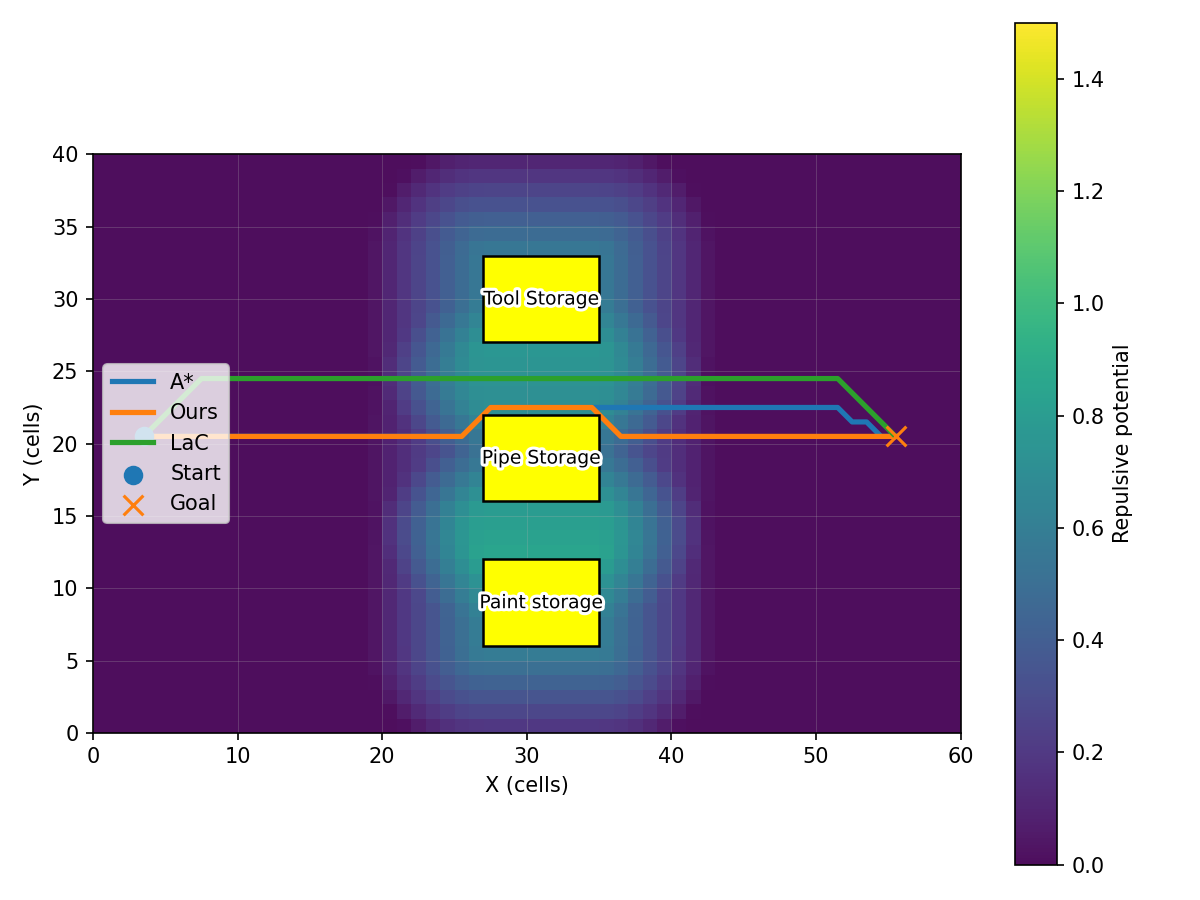}
    \caption{Prompt notifying the robot that the plumbing team has left, implicitly implying that the plumbing storage unit might be inactive}
    \label{fig:Plumbout_Results}
\end{figure}

\begin{table}
\setlength{\tabcolsep}{4pt}
\centering
\footnotesize
\caption{Bayesian Bootstrap Diagnostics and Path Metrics for Implicit Prompt with the Plumbing task has finished}
\label{tab:prompt3-plumbout}

\begin{tabular}{l *{3}{S[table-format=2.3]} cc}
\toprule
\multicolumn{6}{c}{\textbf{Hyperparameters} ($k{=}16$, $R{=}3000$, $\alpha=0.1$, $\tau{=} 1.0$, $\gamma = 1.0$)} \\
\midrule
& {\bfseries Paint Storage} & {\bfseries Pipe Storage} & {\bfseries Tool Storage} & & \\
\cmidrule(lr){2-4}
\textbf{Posterior CVaR} & 0.312 & 0.237 & 0.296 & & \\
\midrule
\multicolumn{6}{c}{\textbf{Path Metrics}} \\
\midrule
\textbf{Method} & \textbf{Length} & \textbf{Min.\ Dist.} & \textbf{Avg.\ Dist.} & \multicolumn{2}{c}{} \\
\midrule
A* path & 53.66 & 0.50 & 9.44 & \multicolumn{2}{c}{} \\
Ours    & 53.66 & 0.50 & 9.16 & \multicolumn{2}{c}{} \\
LaC & 55.31 & 2.50 &  10.72 & \multicolumn{2}{c}{} \\
\bottomrule
\end{tabular}
\end{table}

\subsection{Real-world evaluation through environment digital twin}
To demonstrate that our prompt-conditioned semantic method works in a real-world scenario, we used the building information model (BIM) of a lab space to test the experiment. The RVT file is first exported as an FBX and can be uploaded to any simulation environment, such as Gazebo or Unity. An example of this environment is shown in Figure~\ref{fig:BIM}. As shown in Figs.~\ref{fig:realroomempty} and \ref{fig:realroombusy}, our simulated environment closely mirrors the corresponding real-world lab rooms. One advantage of the RVT to FBX export is that the families within the model stay labeled with the same names that they have in the modeling software. We use a Unitree Go1 for the real-world experiment; the model will track its calculated path given the BIM. The path planning has been conducted by the backend and visualized in RViz, and the Unitree SDK has been assigned to follow the path. 
\par 
We evaluate the pathfinding algorithm using \emph{semantic success}: the fraction of trajectories that are semantically consistent with the environment and satisfy the prompt constraints, following prior work~\cite{LTLCODEGEN}. As shown in Figures~\ref{fig:empty_go1} and \ref{fig:busy_go1}, we evaluate two real-world scenarios corresponding to the prompts \textit{"Computer station 1 is empty"} and \textit{"Computer station 1 is active"}. Our results can be seen in Table \ref{tab:successRate}. Snapshots of the RViz trajectories for each of the tested scenarios can be seen in Figures \ref{fig:rvizactive} and \ref{fig:rvizinactive}.
\begin{figure}
    \centering
    \includegraphics[width=1.0\linewidth]{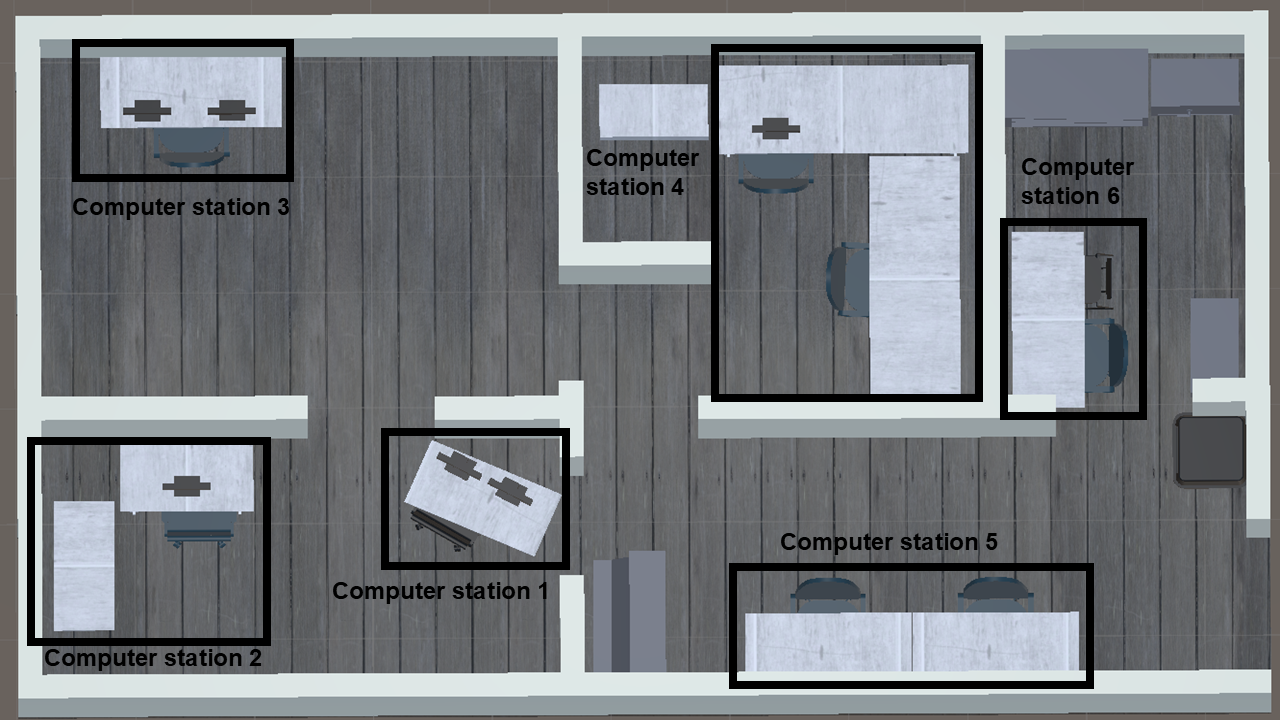}
    \caption{Imported BIM file to act as the semantic map ready for path planning.}
    \label{fig:BIM}
\end{figure}
\begin{figure}[t]
    \centering
    \includegraphics[width=0.9\linewidth]{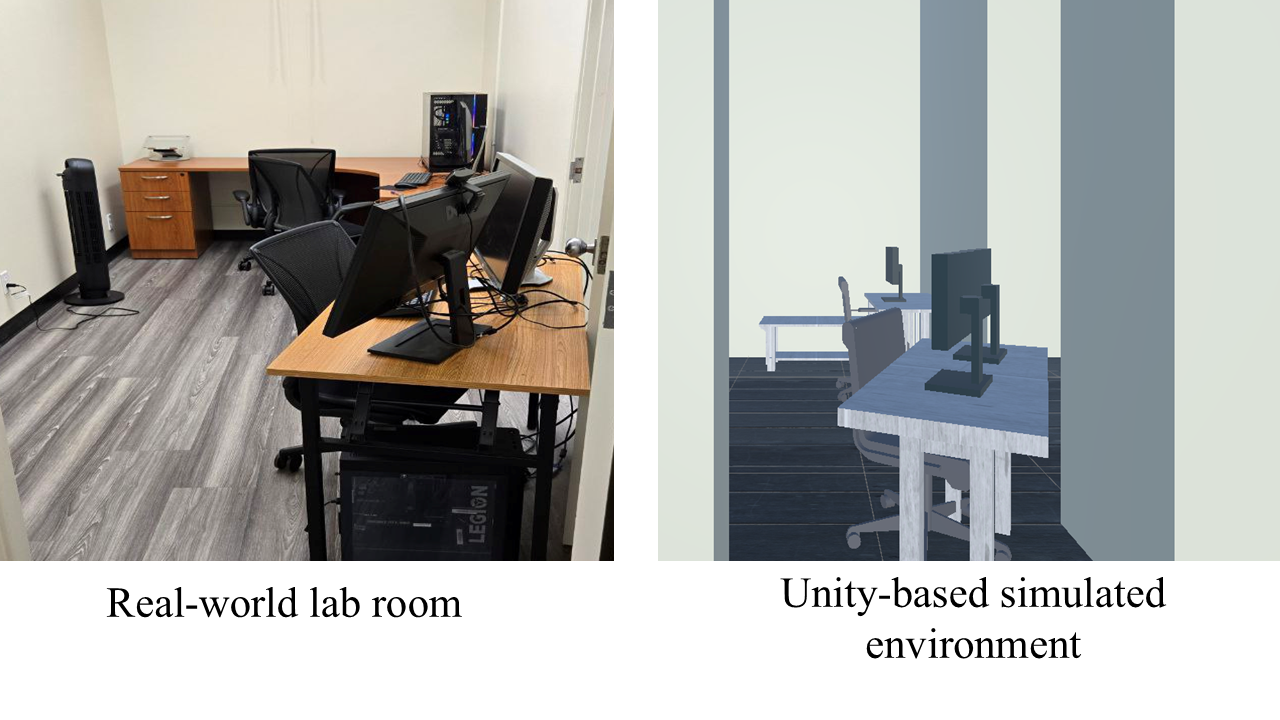}
    \caption{Comparison between the real-world lab environment (left) and its Unity-based simulation (right).}
    \label{fig:realroomempty}
\end{figure}

\begin{figure}[t]
    \centering
    \includegraphics[width=0.9\linewidth]{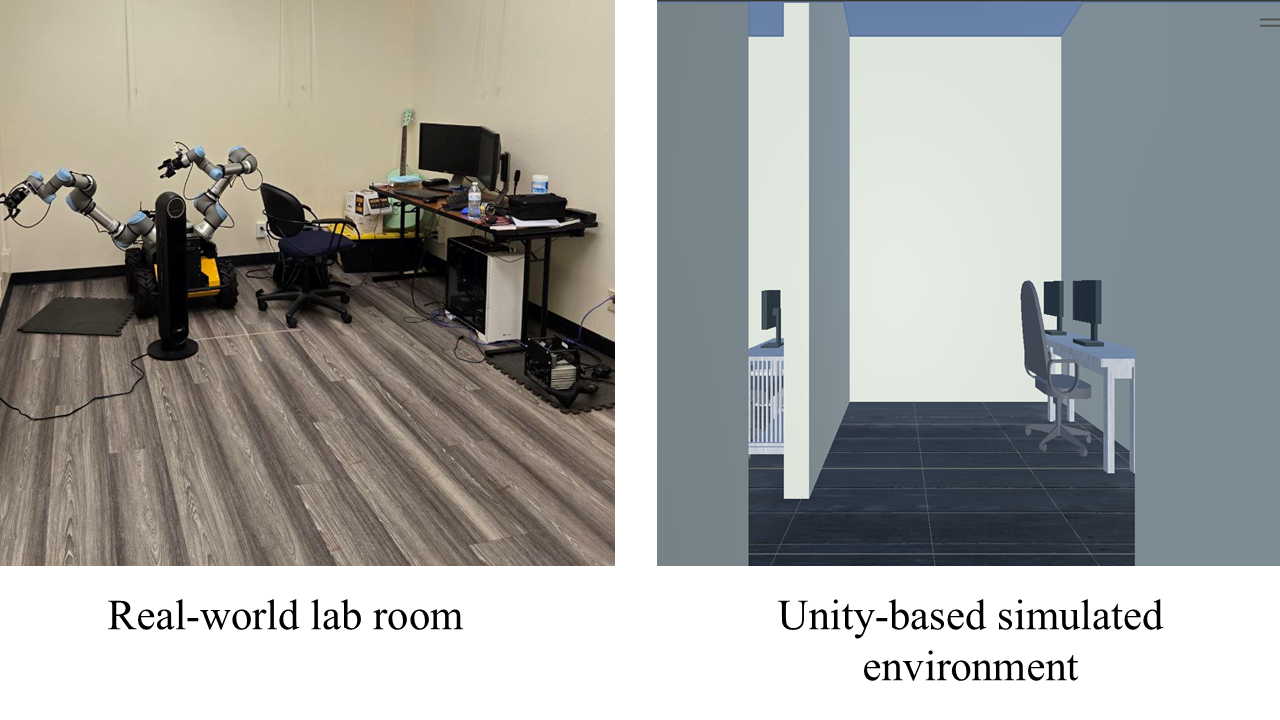}
    \caption{Comparison between the real-world lab environment (left) and its Unity-based simulation (right).}
    \label{fig:realroombusy}
\end{figure}

\begin{figure}[t]
    \centering
    \includegraphics[width=\linewidth]{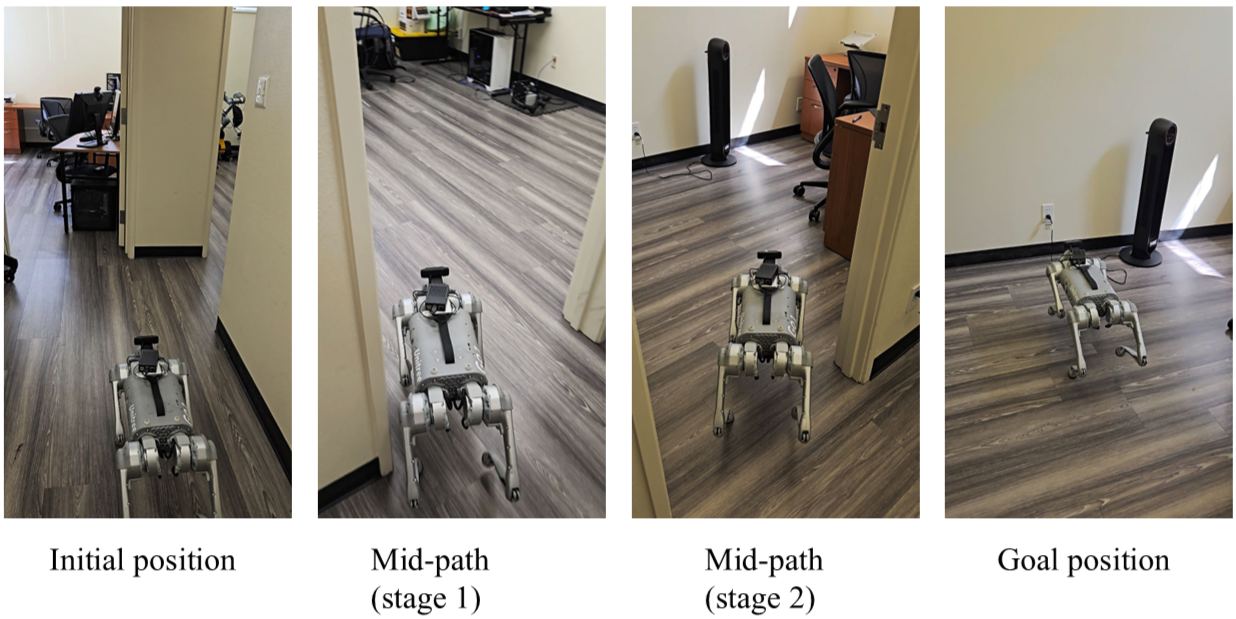}
    \caption{Snapshots of Go1 navigation given the prompt: \textit{"Computer station 1 is  active. Go to the destination."}}
    \label{fig:busy_go1}
\end{figure}

\begin{figure}[t]
    \centering
    \includegraphics[width=\linewidth]{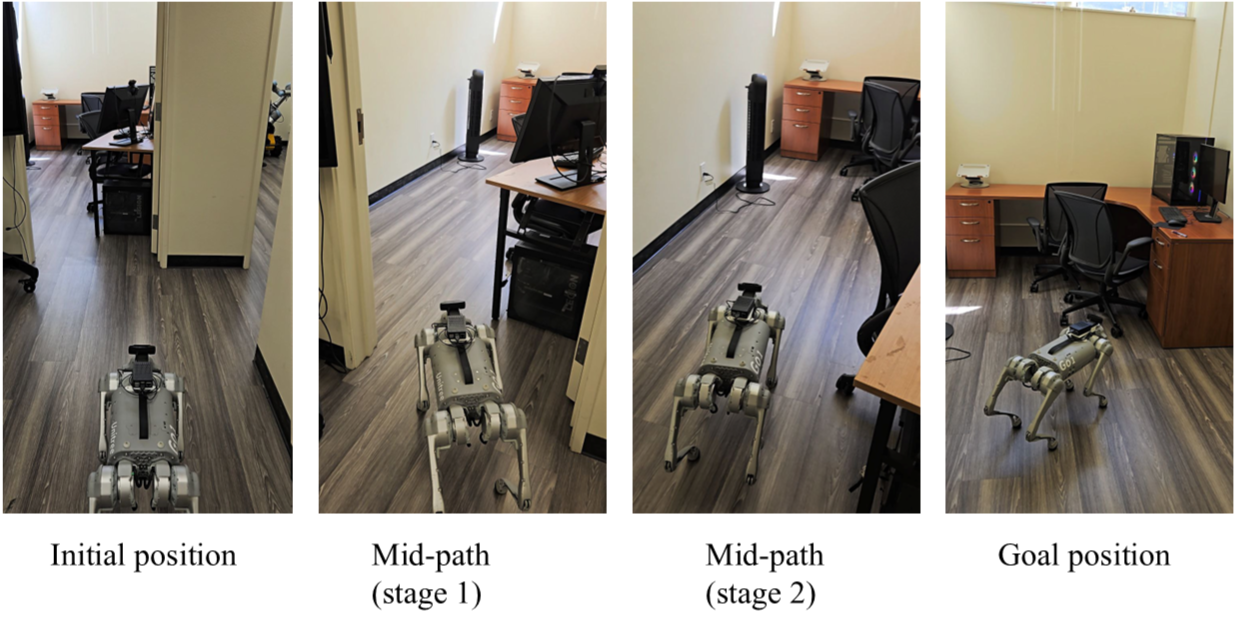}
    \caption{Snapshots of Go1 navigation given the prompt: \textit{"Computer station 1 is empty. Go to the destination."}}
    \label{fig:empty_go1}
\end{figure}
\begin{figure}
    \centering
    \includegraphics[width=1.0\linewidth]{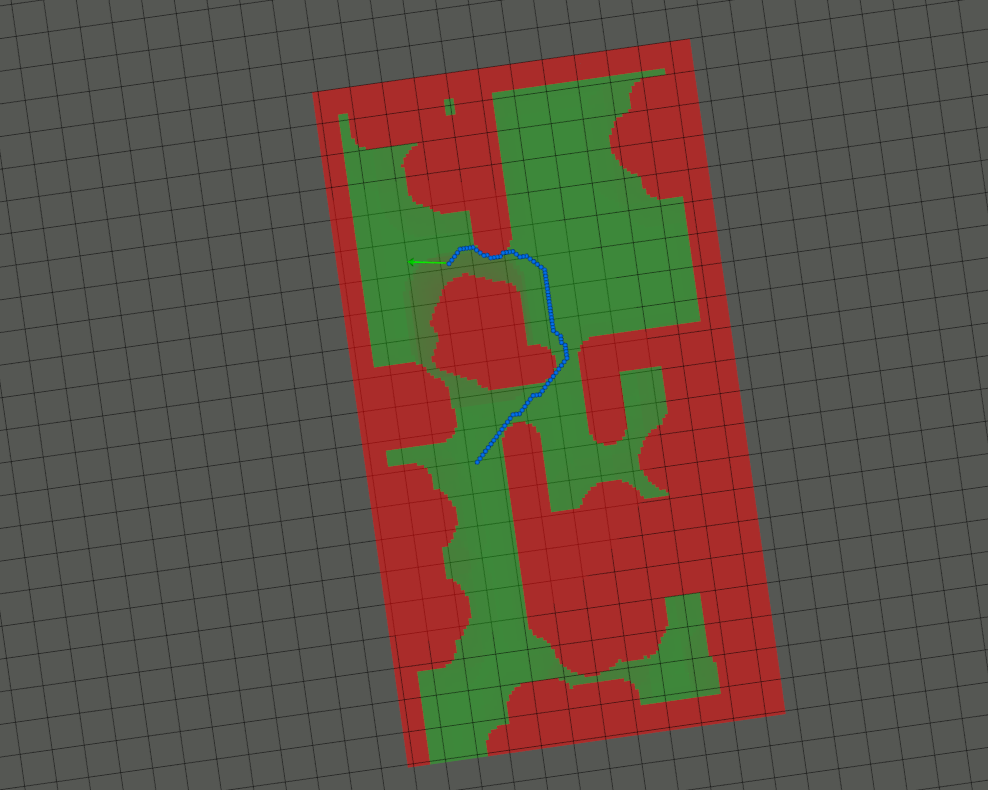}
    \caption{Path that was taken under a semantically active prompt}
    \label{fig:rvizactive}
\end{figure}
\begin{figure}
    \centering
    \includegraphics[width=1.0\linewidth]{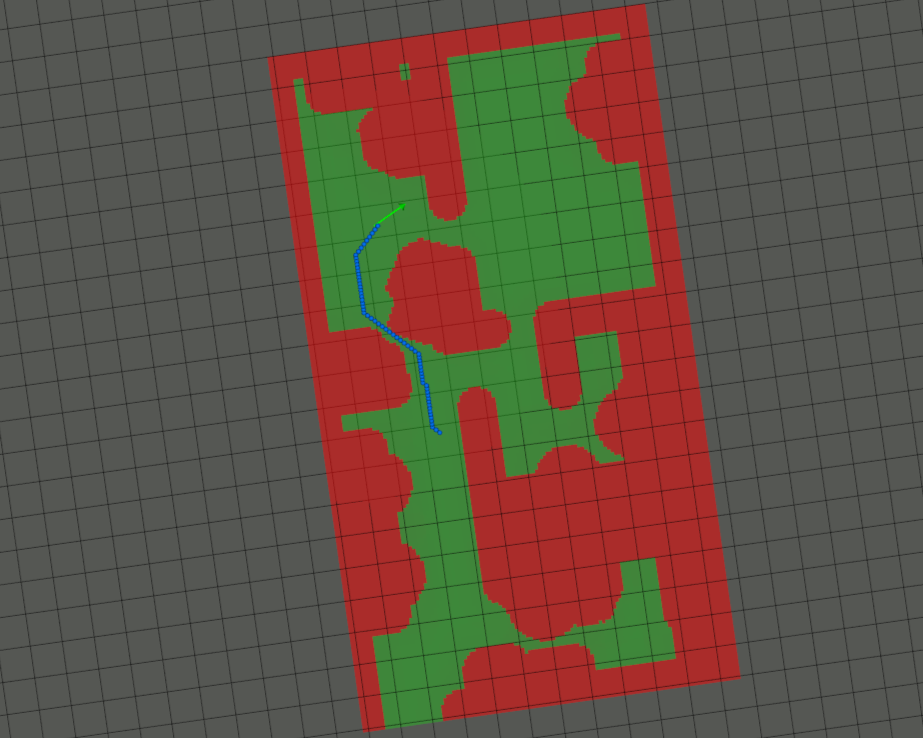}
    \caption{Path that was taken under a semantically inactive prompt}
    \label{fig:rvizinactive}
\end{figure}

\par
\begin{table}
\centering
\caption{Semantic Success Results in Real-World Experiments}
\begin{tabular}{|l|c|}
\hline
\textbf{Prompt} & \textbf{Semantic Success} \\ \hline
Computer station 1 is active. Go to the destination. & 100\% \\ \hline
Computer station 1 is empty. Go to the destination.  & 100\% \\ \hline
\end{tabular}
\label{tab:successRate}
\end{table}

\subsection{Ablation study}
LLMs are often time and energy-intensive, which can constrain how many queries a robot can issue in practice. We therefore study how the posterior statistics vary with the number of shots \(k\). As summarized in Table~\ref{tab:timing-shots-thin}, multi-shot querying need not be slow when the API aggregates concurrent calls, yielding near-constant per-shot latency up to moderate \(k\). Moreover, Figures~\ref{fig:Ablation_results_empty} and~\ref{fig:Ablation_results_busy} show that increasing \(k\) smooths variability in the posterior and captures a broader sample of the LLM’s stochastic judgments—leading to more stable, semantically consistent costs than treating the language model as a single-shot sensor.

\begin{table}
\caption{LLM sampling time vs.\ number of shots (10 runs each).}
\label{tab:timing-shots-thin}
\centering
\begin{tabular*}{\linewidth}{@{\extracolsep{\fill}} r r r r r}
\toprule
\textbf{k} & \textbf{runs} & \textbf{mean [s]} & \textbf{std [s]} & \textbf{mean/shot [s]} \\
\midrule
 1  & 10 & 0.9500  & 0.2742  & 0.9036 \\
 2  & 10 & 0.8703  & 0.0586  & 0.7963 \\
 4  & 10 & 1.1120  & 0.1901  & 0.8792 \\
 8  & 10 & 1.2432  & 0.3936  & 0.8219 \\
16  & 10 & 2.0986  & 0.3180  & 1.2037 \\
32  & 10 & 12.5938 & 26.3214 & 2.6507 \\
\bottomrule
\end{tabular*}
\end{table}

\begin{figure}
    \centering
    \includegraphics[width=1.0\linewidth]{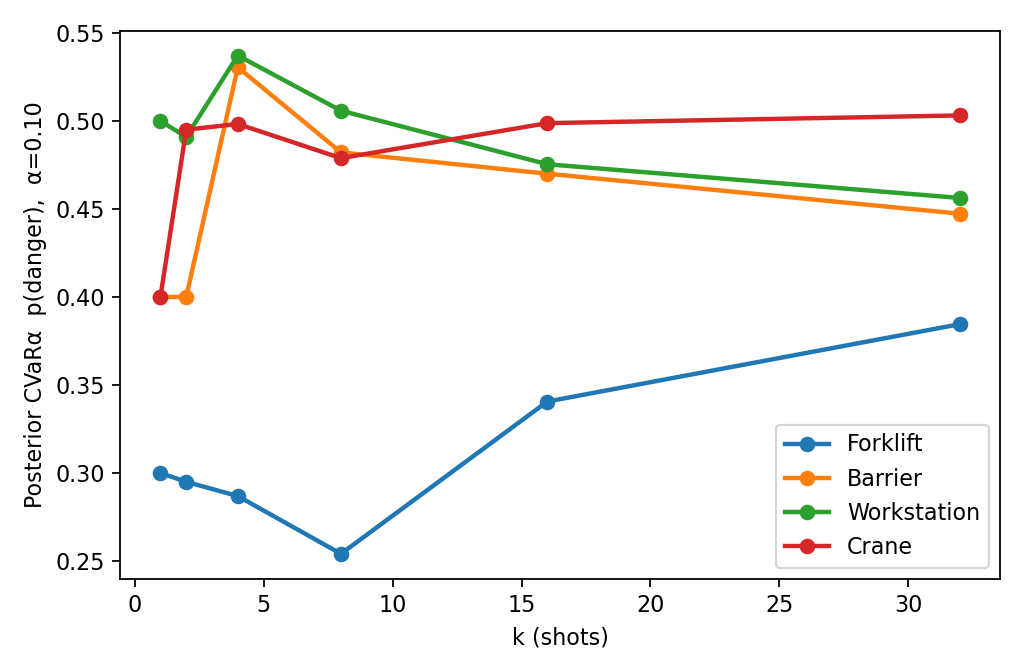}
    \caption{Ablation study over changing how many shots it takes using the CVaR given the prompt: "The forklift is off schedule, but everything else is busy." We can see general stability in the CVaR value over the different shots.}
    \label{fig:Ablation_results_empty}
\end{figure}

\begin{figure}
    \centering
    \includegraphics[width=1.0\linewidth]{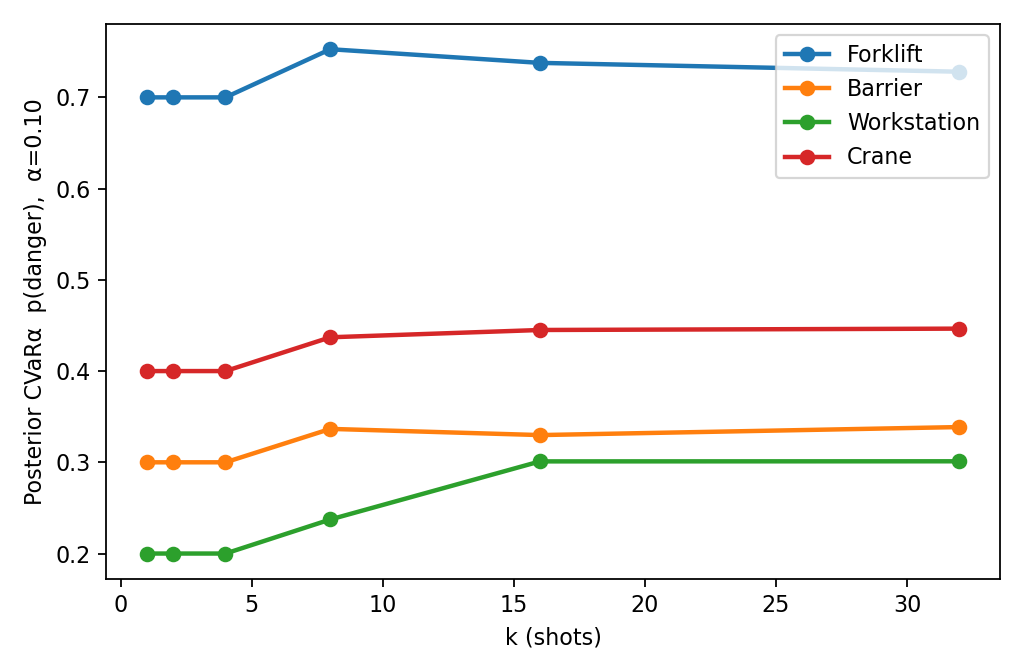}
    \caption{Ablation study over changing how many shots it takes using the CVaR given the prompt "The forklift is busy, but everything else is off schedule." We can see general stability in the CVaR value over the different shots.}
    \label{fig:Ablation_results_busy}
\end{figure}    
\section{Discussions and Limitations}
It is important to note that the LLM is not the main component of this algorithm. The LLM acts as an encoder that regresses Prompt \(P \to \mathbb{R}\), which is aiding us in injecting prompt semantics into classical path planning methods. At its core, this work attempts to add new interpretations to provide language as cost into closed-form settings by using NL encoders to quantify certain semantic qualities in user prompts. LLMs are a strong candidate for this task. However, depending on deployment and application specifics, more lightweight encoders can be deployed, which are tailored to the task requirements. 
\par
We use Bayesian bootstrapping to approximate a posterior over per-class semantic risk. We then compute \(\mathrm{CVaR}_\alpha\) of this posterior to obtain a tail-aware cost that can be plugged into classical search without changing the planner. Depending on the value of \(\alpha\), we can control the cost values to be closer to the mean \(\alpha\to0 \) or very risk-averse \(\alpha \to 1\). This way, we can generalize and modulate different behaviors by using the posterior as opposed to using point estimates from the sample set. Furthermore, cost thresholding can be introduced so that if the repulsive cost surpasses a threshold, path planning could return a no-feasible condition path in which alternative mission parameters can be considered. Future work will learn per-environment posteriors and danger metrics rather than class-specific ones to better capture interactions among classes.
\par

\section{Conclusion}
We introduced a nonparametric, language-conditioned risk model and a closed-form planner with optimality guarantees that fuses semantic information from both the environment and the user prompt, whether provided implicitly or explicitly, by casting the task as a classical path-planning problem. By shaping a CVaR-based semantic potential and solving with MHA* using a consistent Euclidean anchor, our method yields solutions optimal with respect to the combined cost and adapts its conservatism to the prompt: safe descriptions relax the semantic penalty and deliver efficiency gains (shorter paths, fewer expansions, lower latency) at unchanged success, while risk-heavy descriptions increase clearance with only modest extra cost. Across digital-twin and on-robot evaluations, this prompt adaptivity leads to higher flexibility and efficiency than previous works in both semantically safe and unsafe scenarios. More broadly, our results position natural language as a practical safety prior for fieldable planners, enabling task and context-specific behavior without retraining.
\FloatBarrier      
\bibliographystyle{IEEEtran}
\bibliography{bib}

@misc{robust_planning,
      title={Risk-Sensitive and Robust Model-Based Reinforcement Learning and Planning}, 
      author={Marc Rigter},
      year={2023},
      eprint={2304.00573},
      archivePrefix={arXiv},
      primaryClass={cs.LG},
      url={https://arxiv.org/abs/2304.00573}, 
}

@misc{risk_aware_planning,
      title={Distributionally Robust CVaR-Based Safety Filtering for Motion Planning in Uncertain Environments}, 
      author={Sleiman Safaoui and Tyler H. Summers},
      year={2023},
      eprint={2309.08821},
      archivePrefix={arXiv},
      primaryClass={cs.RO},
      url={https://arxiv.org/abs/2309.08821}, 
}

@misc{chance_constrained,
      title={Chance-Constrained Sampling-Based MPC for Collision Avoidance in Uncertain Dynamic Environments}, 
      author={Ihab S. Mohamed and Mahmoud Ali and Lantao Liu},
      year={2025},
      eprint={2501.08520},
      archivePrefix={arXiv},
      primaryClass={cs.RO},
      url={https://arxiv.org/abs/2501.08520}, 
}

@INPROCEEDINGS{semantic_slam,
  author={Asgharivaskasi, Arash and Atanasov, Nikolay},
  booktitle={2021 IEEE International Conference on Robotics and Automation (ICRA)}, 
  title={Active Bayesian Multi-class Mapping from Range and Semantic Segmentation Observations}, 
  year={2021},
  volume={},
  number={},
  pages={1-7},
  keywords={Solid modeling;Uncertainty;Semantics;Probabilistic logic;Bayes methods;Trajectory;Velocity measurement},
  doi={10.1109/ICRA48506.2021.9561711}}

@article{amani2024intelligent,
  title={Intelligent ergonomic optimization in bimanual worker-robot interaction: A Reinforcement Learning approach},
  author={Amani, Mani and Akhavian, Reza},
  journal={Automation in Construction},
  volume={168},
  pages={105741},
  year={2024},
  publisher={Elsevier}
}

@INPROCEEDINGS {APF,
author = { Warren, C.W. },
booktitle = { 1989 IEEE International Conference on Robotics and Automation },
title = {{ Global path planning using artificial potential fields }},
year = {1989},
volume = {},
ISSN = {},
pages = {316,317,318,319,320,321},
abstract = { The author describes a path planning technique for robotic manipulators and mobile robots in the presence of stationary obstacles. The planning consists of applying potential fields around configuration-space obstacles and using these fields to select a safe path for the robot to follow. The advantage of using potential fields in path planning is that they offer a relatively fast and effective way to solve for safe paths around obstacles. In the proposed method of path planning, a trial path is chosen and then modified under the influence of the potential field until an appropriate path is found. By considering the entire path, the problem of being trapped in a local minimum is greatly reduced, allowing the method to be used for global planning. The algorithm was tried with success on many different realistic planning problems. By way of illustration, the algorithm is applied to a two-dimensional revolute manipulator, a mobile robot capable of translation only, and a mobile robot capable of both translation and rotation. },
keywords = {Path planning;Robot kinematics;Mobile robots;Orbital robotics;Manipulators;Process planning},
doi = {10.1109/ROBOT.1989.100007},
url = {https://doi.ieeecomputersociety.org/10.1109/ROBOT.1989.100007},
publisher = {IEEE Computer Society},
address = {Los Alamitos, CA, USA},
month =May}

@inproceedings{Voxblox,
   title={Voxblox: Incremental 3D Euclidean Signed Distance Fields for on-board MAV planning},
   url={http://dx.doi.org/10.1109/IROS.2017.8202315},
   DOI={10.1109/iros.2017.8202315},
   booktitle={2017 IEEE/RSJ International Conference on Intelligent Robots and Systems (IROS)},
   publisher={IEEE},
   author={Oleynikova, Helen and Taylor, Zachary and Fehr, Marius and Siegwart, Roland and Nieto, Juan},
   year={2017},
   month=sep, pages={1366–1373} }

@misc{langascost,
      title={Language as Cost: Proactive Hazard Mapping using VLM for Robot Navigation}, 
      author={Mintaek Oh and Chan Kim and Seung-Woo Seo and Seong-Woo Kim},
      year={2025},
      eprint={2508.03138},
      archivePrefix={arXiv},
      primaryClass={cs.RO},
      url={https://arxiv.org/abs/2508.03138}, 
}

@InProceedings{voxposer,
  title = 	 {VoxPoser: Composable 3D Value Maps for Robotic Manipulation with Language Models},
  author =       {Huang, Wenlong and Wang, Chen and Zhang, Ruohan and Li, Yunzhu and Wu, Jiajun and Fei-Fei, Li},
  booktitle = 	 {Proceedings of The 7th Conference on Robot Learning},
  pages = 	 {540--562},
  year = 	 {2023},
  editor = 	 {Tan, Jie and Toussaint, Marc and Darvish, Kourosh},
  volume = 	 {229},
  series = 	 {Proceedings of Machine Learning Research},
  month = 	 {06--09 Nov},
  publisher =    {PMLR},
  pdf = 	 {https://proceedings.mlr.press/v229/huang23b/huang23b.pdf},
  url = 	 {https://proceedings.mlr.press/v229/huang23b.html},
  abstract = 	 {Large language models (LLMs) are shown to possess a wealth of actionable knowledge that can be extracted for robot manipulation in the form of reasoning and planning. Despite the progress, most still rely on pre-defined motion primitives to carry out the physical interactions with the environment, which remains a major bottleneck. In this work, we aim to synthesize robot trajectories, i.e., a dense sequence of 6-DoF end-effector waypoints, for a large variety of manipulation tasks given an open-set of instructions and an open-set of objects. We achieve this by first observing that LLMs excel at inferring affordances and constraints given a free-form language instruction. More importantly, by leveraging their code-writing capabilities, they can interact with a vision-language model (VLM) to compose 3D value maps to ground the knowledge into the observation space of the agent. The composed value maps are then used in a model-based planning framework to zero-shot synthesize closed-loop robot trajectories with robustness to dynamic perturbations. We further demonstrate how the proposed framework can benefit from online experiences by efficiently learning a dynamics model for scenes that involve contact-rich interactions. We present a large-scale study of the proposed method in both simulated and real-robot environments, showcasing the ability to perform a large variety of everyday manipulation tasks specified in free-form natural language.}
}

@INPROCEEDINGS{C,

  author={Moura, Mateus Sanches and Rizzo, Carlos and Serrano, Daniel},

  booktitle={2021 IEEE International Conference on Autonomous Robot Systems and Competitions (ICARSC)}, 

  title={BIM-based Localization and Mapping for Mobile Robots in Construction}, 

  year={2021},

  volume={},

  number={},

  pages={12-18},

  keywords={Location awareness;Solid modeling;Three-dimensional displays;Simultaneous localization and mapping;Image color analysis;Robot kinematics;Buildings;building information modeling;localization;mapping;robots in construction},

  doi={10.1109/ICARSC52212.2021.9429779}}

@article{E,
author = {Lingzi Wu  and Emad Mohamed  and Parinaz Jafari  and Simaan AbouRizk },
title = {Machine Learning–Based Bayesian Framework for Interval Estimate of Unsafe-Event Prediction in Construction},
journal = {Journal of Construction Engineering and Management},
volume = {149},
number = {11},
pages = {04023118},
year = {2023},
doi = {10.1061/JCEMD4.COENG-13549},

URL = {https://ascelibrary.org/doi/abs/10.1061/JCEMD4.COENG-13549},
eprint = {https://ascelibrary.org/doi/pdf/10.1061/JCEMD4.COENG-13549}
,
    abstract = { Construction safety is a critical concern for industry and academia, and numerous models and algorithms have been developed to predict incidents or accidents to facilitate proactive decision-making. However, previous studies have been limited due to the inability to account for uncertainties because predictions are given as a single value (i.e.,\&nbsp;Yes or No) and the failure to integrate subjective judgment. To address these limitations, this research proposes a machine learning–based Bayesian framework for predicting construction incidents using interval estimates. This framework combines a state-of-the-art machine-learning algorithm with a binary Bayesian inference model to develop an incident predictor that considers a range of project characteristics and conditions. Notably, this framework also is capable of incorporating historical or subjective judgment through prior selection and outputs the unsafe event prediction as an interval of possibilities, thus accounting for various uncertainties. The efficacy of our framework was demonstrated in a real-life case study, showcasing its practical implications for proactive decision-making and risk management in the construction industry and representing a valuable contribution to the field of construction safety. }
}

@misc{LLMreason,
      title={LLMs Can Easily Learn to Reason from Demonstrations Structure, not content, is what matters!}, 
      author={Dacheng Li and Shiyi Cao and Tyler Griggs and Shu Liu and Xiangxi Mo and Eric Tang and Sumanth Hegde and Kourosh Hakhamaneshi and Shishir G. Patil and Matei Zaharia and Joseph E. Gonzalez and Ion Stoica},
      year={2025},
      eprint={2502.07374},
      archivePrefix={arXiv},
      primaryClass={cs.AI},
      url={https://arxiv.org/abs/2502.07374}, 
}

@article{AStar,
author = {Dechter, Rina and Pearl, Judea},
title = {Generalized best-first search strategies and the optimality of A*},
year = {1985},
issue_date = {July 1985},
publisher = {Association for Computing Machinery},
address = {New York, NY, USA},
volume = {32},
number = {3},
issn = {0004-5411},
url = {https://doi.org/10.1145/3828.3830},
doi = {10.1145/3828.3830},
abstract = {This paper reports several properties of heuristic best-first search strategies whose scoring functions undefined depend on all the information available from each candidate path, not merely on the current cost g and the estimated completion cost h. It is shown that several known properties of A* retain their form (with the minmax of f playing the role of the optimal cost), which helps establish general tests of admissibility and general conditions for node expansion for these strategies. On the basis of this framework the computational optimality of A*, in the sense of never expanding a node that can be skipped by some other algorithm having access to the same heuristic information that A* uses, is examined. A hierarchy of four optimality types is defined and three classes of algorithms and four domains of problem instances are considered. Computational performances relative to these algorithms and domains are appraised. For each class-domain combination, we then identify the strongest type of optimality that exists and the algorithm for achieving it. The main results of this paper relate to the class of algorithms that, like A*, return optimal solutions (i.e., admissible) when all cost estimates are optimistic (i.e., h ≤ h*). On this class, A* is shown to be not optimal and it is also shown that no optimal algorithm exists, but if the performance tests are confirmed to cases in which the estimates are also consistent, then A* is indeed optimal. Additionally, A* is also shown to be optimal over a subset of the latter class containing all best-first algorithms that are guided by path-dependent evaluation functions.},
journal = {J. ACM},
month = jul,
pages = {505–536},
numpages = {32}
}

@article{D,
author = {Khashayar Asadi  and Hariharan Ramshankar  and Mojtaba Noghabaei  and Kevin Han },
title = {Real-Time Image Localization and Registration with BIM Using Perspective Alignment for Indoor Monitoring of Construction},
journal = {Journal of Computing in Civil Engineering},
volume = {33},
number = {5},
pages = {04019031},
year = {2019},
doi = {10.1061/\(ASCE\)CP.1943-5487.0000847},

URL = {{https://ascelibrary.org/doi/abs/10.1061/(ASCE)CP.1943-5487.0000847}},

eprint = {https://ascelibrary.org/doi/pdf/10.1061/%28ASCE%29CP.1943-5487.0000847}
,
    abstract = { Construction performance monitoring has been identified as a key component that leads to the success of a construction project. Real-time and frequent monitoring will enable early detection of potential schedule delays and facilitate the communication of progress information accurately and quickly. To facilitate as-built and as-planned data comparison, this paper proposes an automated registration of a video sequence (i.e.,\&nbsp;a series of image frames) to an as-planned building information model (BIM) in real time. This method discovers the camera poses of image frames in the BIM coordinate system by performing an augmented monocular simultaneous localization and mapping (SLAM) and perspective detecting and matching between the image frames and their corresponding BIM views. The results demonstrate the effectiveness of real-time registration of images with BIMs. The presented method can potentially fully automate past studies that automate progress inference, given visual representation of as-built models aligned with BIM. Moreover, it will facilitate communication on jobsites by associating quality and progress with visuals that are in the BIM coordinate system. }
}

@article{bayesianbootstrap,
  title={The bayesian bootstrap},
  author={Rubin, Donald B},
  journal={The annals of statistics},
  pages={130--134},
  year={1981},
  publisher={JSTOR}
}

@misc{LTLCODEGEN,
      title={LTLCodeGen: Code Generation of Syntactically Correct Temporal Logic for Robot Task Planning}, 
      author={Behrad Rabiei and Mahesh Kumar A. R. and Zhirui Dai and Surya L. S. R. Pilla and Qiyue Dong and Nikolay Atanasov},
      year={2025},
      eprint={2503.07902},
      archivePrefix={arXiv},
      primaryClass={cs.RO},
      url={https://arxiv.org/abs/2503.07902}, 
}

@article{amani2024,
  author       = {Amani, Mani and Akhavian, Reza},
  title        = {Safe and trustworthy robot pathfinding with BIM, MHA*, and NLP},
  journal      = {Construction Robotics},
  volume       = {9},
  number       = {18},
  year         = {2025},
  doi          = {10.1007/s41693-025-00161-1},
  url          = {https://doi.org/10.1007/s41693-025-00161-1}
}

@article{victor2022dynamical,
  title={Dynamical repulsive fractional potential fields in 3D environment},
  author={Victor, St{\'e}phane and Ruiz, Kendric and Melchior, Pierre and Chaumette, Serge},
  journal={Fractional Calculus and Applied Analysis},
  volume={25},
  number={2},
  pages={321--345},
  year={2022},
  publisher={Springer}
}

@misc{LMNav,
      title={LM-Nav: Robotic Navigation with Large Pre-Trained Models of Language, Vision, and Action}, 
      author={Dhruv Shah and Blazej Osinski and Brian Ichter and Sergey Levine},
      year={2022},
      eprint={2207.04429},
      archivePrefix={arXiv},
      primaryClass={cs.RO},
      url={https://arxiv.org/abs/2207.04429}, 
}

@article{CVAR,
title = {Conditional value-at-risk for general loss distributions},
journal = {Journal of Banking \& Finance},
volume = {26},
number = {7},
pages = {1443-1471},
year = {2002},
issn = {0378-4266},
doi = {https://doi.org/10.1016/S0378-4266(02)00271-6},
url = {https://www.sciencedirect.com/science/article/pii/S0378426602002716},
author = {R.Tyrrell Rockafellar and Stanislav Uryasev},
keywords = {Value-at-risk, Conditional value-at-risk, Mean shortfall, Coherent risk measures, Risk sampling, Scenarios, Hedging, Index tracking, Portfolio optimization, Risk management},
abstract = {Fundamental properties of conditional value-at-risk (CVaR), as a measure of risk with significant advantages over value-at-risk (VaR), are derived for loss distributions in finance that can involve discreetness. Such distributions are of particular importance in applications because of the prevalence of models based on scenarios and finite sampling. CVaR is able to quantify dangers beyond VaR and moreover it is coherent. It provides optimization short-cuts which, through linear programming techniques, make practical many large-scale calculations that could otherwise be out of reach. The numerical efficiency and stability of such calculations, shown in several case studies, are illustrated further with an example of index tracking.}
}

@article{MHA*,
author = {Sandip Aine and Siddharth Swaminathan and Venkatraman Narayanan and Victor Hwang and Maxim Likhachev},
title ={Multi-Heuristic A*},

journal = {The International Journal of Robotics Research},
volume = {35},
number = {1-3},
pages = {224-243},
year = {2016},
doi = {10.1177/0278364915594029},

URL = { 
    
        https://doi.org/10.1177/0278364915594029
    
    

},
eprint = { 
    
        https://doi.org/10.1177/0278364915594029
    
    

}
,
    abstract = { The performance of heuristic search-based planners depends heavily on the quality of the heuristic function used to focus the search. These algorithms work fast and generate high-quality solutions, even for high-dimensional problems, as long as they are given a well-designed heuristic function. On the other hand, their performance can degrade considerably if there are large heuristic depression regions, i.e. regions in the search space where heuristic values do not correlate well with the actual cost-to-goal values. Consequently, the research in developing an efficient planner for a specific domain becomes the design of a good heuristic function. However, for many domains, it is hard to design a single heuristic function that captures all of the complexities of the problem. Furthermore, it is hard to ensure that heuristics are admissible (provide lower bounds on the cost-to-goal) and consistent, which is necessary for A* like searches to provide guarantees on completeness and bounds on sub-optimality. In this paper, we develop a novel heuristic search, called Multi-Heuristic A* (MHA*), that takes in multiple, arbitrarily inadmissible heuristic functions in addition to a single consistent heuristic, and uses all of them simultaneously to search in a way that preserves guarantees on completeness and bounds on sub-optimality. This enables the search to combine very effectively the guiding powers of different heuristic functions and simplifies dramatically the process of designing heuristic functions by a user because these functions no longer need to be admissible or consistent. We support these claims with experimental analysis on several domains ranging from inherently continuous domains such as full-body manipulation and navigation to inherently discrete domains such as the sliding tile puzzle. }
}
\vspace{12pt}

\end{document}